\providecommand{\SetAlgoLined}{\SetLine}
\newtheorem{thm}{Theorem}
\newtheorem{lem}{Lemma}
\theoremstyle{definition}
\DeclareMathOperator*{\argmin}{arg\,min}
\newcommand{\ie}{{\it i.e. }}
\newcommand{\eg}{{\it e.g. }}
\newcommand{\vs}{{\it vs. }}
\newcommand{\etal}{{\it et. al. }}
\newcommand{\rmnum}[1]{\romannumeral #1}
\icmltitlerunning{Regularization for Multiple Kernel Learning via Sum-Product Networks}
\begin{document} 

\twocolumn[
\icmltitle{Regularization for Multiple Kernel Learning via Sum-Product Networks}

\icmlauthor{Your Name}{email@yourdomain.edu}
\icmladdress{Your Fantastic Institute,
            314159 Pi St., Palo Alto, CA 94306 USA}
\icmlauthor{Your CoAuthor's Name}{email@coauthordomain.edu}
\icmladdress{Their Fantastic Institute,
            27182 Exp St., Toronto, ON M6H 2T1 CANADA}

\icmlkeywords{structured learning, multiple kernel learning, convex regularizer}

\vskip 0.3in
]

\begin{abstract}
In this paper, we are interested in constructing general graph-based regularizers for multiple kernel learning (MKL) given a structure which is used to describe the way of combining basis kernels. Such structures are represented by sum-product networks (SPNs) in our method. Accordingly we propose a new convex regularization method for MLK based on a path-dependent kernel weighting function which encodes the entire SPN structure in our method. Under certain conditions and from the view of probability, this function can be considered to follow multinomial distributions over the weights associated with product nodes in SPNs. We also analyze the convexity of our regularizer and the complexity of our induced classifiers, and further propose an efficient wrapper algorithm to optimize our formulation. In our experiments, we apply our method to ......
\end{abstract}

\section{Introduction}
In real world, information can be always organized under certain structures, which can be considered as the prior knowledge about the information. For instance, to understand a 2D scene, we can decompose the scene as ``scene $\rightarrow$ objects $\rightarrow$ parts $\rightarrow$ regions $\rightarrow$ pixels'', and reason the relations between them \cite{lubor}. Using such structures, we can answer questions like ``what and where the objects are'' \cite{Ladicky:2010:GCB:1888150.1888170} and ``what the geometric relations between the objects are'' \cite{DesaiRF_IJCV_2011}. Therefore, information structures are very important and useful for information integration and reasoning.

Multiple kernel learning (MKL) is a powerful tool for information integration, which aims to learn optimal kernels for the tasks by combining different basis kernels linearly \cite{simplemkl,DBLP:conf/icml/XuJYKL10,journals/jmlr/KloftBSZ11} or nonlinearly \cite{DBLP:conf/nips/Bach08,DBLP:conf/nips/CortesMR09,DBLP:conf/icml/VarmaB09} with certain constraints on kernel weights. In \cite{gonen11jmlr} a nice review on different MKL algorithms was given, and in \cite{Ryota2011} some regularization strategies on kernel weights were discussed. 

Recently, structure induced regularization methods have been attracting more and more attention \cite{DBLP:journals/corr/abs-1109-2397,Maurer:2012:SSG:2188385.2188408,Geer:arXiv1204.4813,DBLP:journals/csda/LinHP14}. For MKL, Bach \cite{DBLP:conf/nips/Bach08} proposed a hierarchical kernel selection (or more precisely, kernel decomposition) method for MKL based on directed acyclic graph (DAG) using structured sparsity-induced norm such as $\ell_1$ norm or block $\ell_1$ norm \cite{Jenatton:2011:SVS:1953048.2078194}. Szafranski \etal \cite{Szafranski:2010:CKL:1745449.1745456} proposed a composite kernel learning method based on tree structures, where the regularization term in their optimization formulation is a composite absolute penalty term \cite{Zhao_thecomposite}. Though the structure information of how to combine basis kernels are taken into account when constructing regularizers, however, the weights of nodes in the structures appear independently in these regularizers. This type of formulations actually weaken the connections between the nodes in the structures, making the learning rather easy.

To distinguish our work from previous research on regularization for MKL:
\begin{itemize}
\item[(1)] We utilize the sum-product networks (SPNs) \cite{DBLP:conf/uai/PoonD11} to describe the procedure of combining basis kernels. An SPN is a more general and powerful deep graphical representation consisting of only sum nodes and product nodes. Considering that the optimal kernel in MKL is created using summations and/or multiplications between non-negative weights and basis kernels, this procedure can be naturally described by SPNs. Notice that in general SPNs may not describe kernel embedding directly \cite{DBLP:journals/jmlr/ZhuangTH11a, journals/corr/StroblV13}. However, using Taylor series we still can approximate kernel embedding using SPNs.
\item[(2)] We accordingly propose a convex regularization method based on a new path-dependent kernel weighting function, which encodes the entire structures of SPNs. This function can be considered to follow multinomial distributions, involving much stronger connections between the node weights.
\end{itemize}
We also analyze the convexity of our regularizer and the Rademacher complexity of the induced MKL classifiers. Further we propose an efficient wrapper algorithm to solve our problem, where the weights are updated using gradient descent methods \cite{opac-b1131390}.

The rest of this paper is organized as follows. In Section \ref{sec:weighting_function}, we explain how to describe the kernel combination procedure using SPNs and our path-dependent kernel weighting function based on SPNs. In Section \ref{sec:method}, we provide the details of our regularization method, namely SPN-MKL, including the analysis of regularizer convexity, Rademacher complexity, and our optimization algorithm. We show our experimental results and comparisons among different methods on ......

\section{Path-dependent Kernel Weighting Function}\label{sec:weighting_function}
\subsection{Sum-Product Networks}\label{ssec:spn}
A sum-product network (SPN) is a rooted directed acyclic graph (DAG) whose internal nodes are sums and products \cite{DBLP:conf/uai/PoonD11}.

Given an SPN for MKL, we denote a \textbf{\em path} from the root node to a leaf node (\ie kernel) in the SPN as $\mathbf{m}\in\mathcal{M}$ where $\mathcal{M}$ consists of all the paths, and a \textbf{\em product node} as $v\in\mathcal{V}$ where $\mathcal{V}$ consists of all the nodes. Along each path $\mathbf{m}$, we call the sub-path between any pair of adjacent sum nodes or between a leaf node and its adjacent sum node a \textbf{\em layer}, and denote it as $\mathbf{m}_l$ $(l\geq1)$ and the number of layers along $\mathbf{m}$ as $N_{\mathbf{m}}$. We denote the number of product nodes in layer $\mathbf{m}_l$ as $N_{\mathbf{m}_l}$. We also denote the weights associated with path $\mathbf{m}$ and the weight of the $n^{th}$ product node in its layer $\mathbf{m}_l$ as $\boldsymbol{\beta}_{\mathbf{m}}$ and $\beta_{m_l^n}$, respectively, and $\boldsymbol{\beta}_{\mathbf{m}}$ is a vector consisting of all $\{\beta_{m_l^n}\}_{\forall m_l^n}$. There is no associated weight for any sum node in the SPN.

\begin{figure}[t]
\begin{center}
 \centerline{\includegraphics[width=\columnwidth]{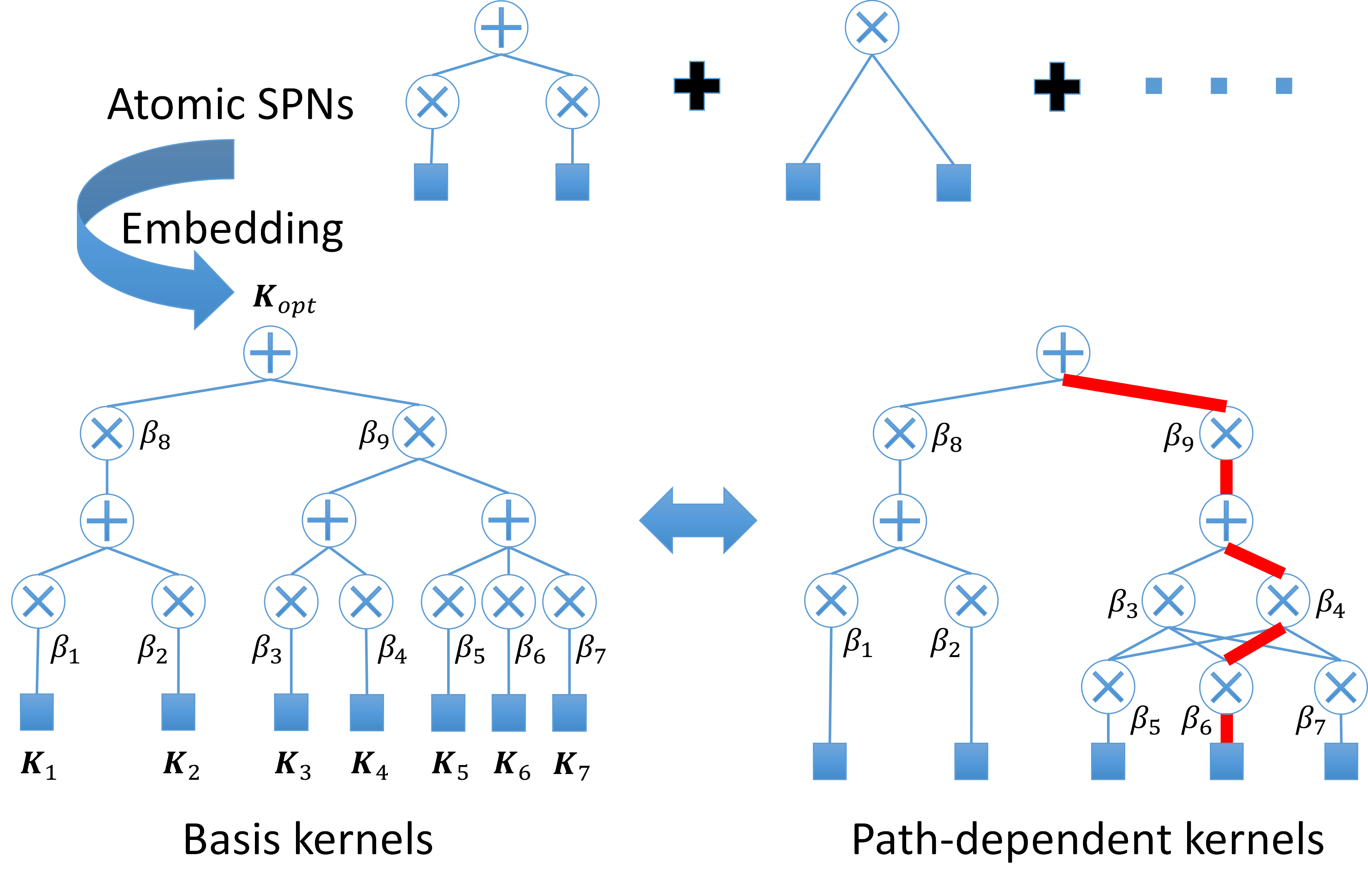}}
\end{center}\vspace{-3mm}
\caption{\footnotesize{An example (\ie bottom left) of constructing an SPN for basis kernel combination by embedding atomic SPNs into each other. All the weights (\ie $\beta$'s) associated with product nodes in the SPN are learned in our method. The red edges in the bottom right graph denote a path from the root to a path-dependent kernel. This figure is best viewed in color.}}\label{fig:spn}
\end{figure}

Fig. \ref{fig:spn} gives an example of constructing an SPN for basis kernel combination by embedding atomic SPNs into each other. \textbf{\em Atomic SPNs} in our method are the SPNs with {\em single} layer. Given an SPN as shown at the bottom left in Fig. \ref{fig:spn} and the node weights, we can easily calculate the optimal kernel as $\mathbf{K}_{opt}=\beta_8(\beta_1\mathbf{K}_1+\beta_2\mathbf{K}_2)+\beta_9(\beta_3\mathbf{K}_3+\beta_4\mathbf{K}_4)\circ(\beta_5\mathbf{K}_5+\beta_6\mathbf{K}_6+\beta_7\mathbf{K}_7)$, where $\circ$ denotes the entry-wise product between two matrices. Moreover, we can rewrite $\mathbf{K}_{opt}$ as $\mathbf{K}_{opt}=\beta_8(\beta_1\mathbf{K}_1+\beta_2\mathbf{K}_2)+\beta_9\prod_{i=3}^4\prod_{j=5}^7\beta_i\beta_j(\mathbf{K}_i\circ\mathbf{K}_j)$, whose combination procedure can be described using the SPN at the bottom right in Fig. \ref{fig:spn}. Here $\forall i, j, \mathbf{K}_i\circ\mathbf{K}_j$ is a \textbf{\em path-dependent kernel}. For instance, the corresponding kernel for the path denoted by the red edges at the bottom right figure is $\mathbf{K}_4\circ\mathbf{K}_6$. In fact, such kernel combination procedures for MKL can be always represented using SPNs in similar ways as shown at the bottom right figure.

Traditionally, SPNs are considered as probabilistic models and learned in an unsupervised manner \cite{NIPS2012_1484,2715,DBLP:conf/uai/PoonD11}. However, in our method we only utilize SPNs as representations to describe the kernel combination procedure, and learn the weights associated with their product nodes for MKL. In addition, from the aspect of structures for kernel combination, many existing MKL methods, \eg \cite{simplemkl,DBLP:conf/nips/CortesMR09,DBLP:conf/icml/XuJYKL10,Szafranski:2010:CKL:1745449.1745456}, can be considered as our special cases.

\subsection{Our Kernel Weighting Function}
Given an SPN and its associated weights $\beta$'s, we define our {\em path-dependent kernel weighting function} $g_{\mathbf{m}}(\boldsymbol{\beta}_{\mathbf{m}})$ as:
\begin{equation}
\forall\mathbf{m}\in\mathcal{M},\; g_{\mathbf{m}}(\boldsymbol{\beta}_{\mathbf{m}})=\prod_{l=1}^{N_{\mathbf{m}}}\prod_{n=1}^{N_{\mathbf{m}_l}}\left(\beta_{m_l^n}\right)^{\frac{1}{N_{\mathbf{m}}N_{\mathbf{m}_l}}}.
\end{equation} 
Taking the red path in Fig. \ref{fig:spn} for example, the kernel weighting function for this path is $g=\beta_9^{\frac{1}{2\times1}}\beta_4^{\frac{1}{2\times2}}\beta_6^{\frac{1}{2\times2}}$ with $N_{\mathbf{m}}=2$, $N_{\mathbf{m}_1}=1$, and $N_{\mathbf{m}_2}=2$.

Given an SPN and $\forall \mathbf{m}\in\mathcal{M}$, suppose $\forall m_l^n, 0\leq\beta_{m_l^n}\leq1$. Then from the view of probability, since $\forall \mathbf{m}\in\mathcal{M}$, $N_{\mathbf{m}}$ and $N_{\mathbf{m}_l}$ are constants, $g_{\mathbf{m}}$ actually follows a multinomial distribution with variables $\boldsymbol{\beta}_{\mathbf{m}}$ (ignoring the scaling factor). This is different from recent work \cite{DBLP:conf/icml/Gonen12}, where the kernel weights are assumed to follow multivariate normal distributions so that efficient inference can be performed. In contrast, our kernel weighting function is intuitively derived from the SPN structure, and under certain simple condition, it can guarantee the convexity of our proposed regularizer (see our Lemma \ref{lem:convexity}). 

\section{SPN-MKL}\label{sec:method}
\subsection{Formulation}
Given $N_{\mathbf{x}}$ training samples $\{(\mathbf{x}_i,y_i)\}$, where $\forall i, \mathbf{x}_i\in\mathbb{R}^d$ is an input data vector and $y_i\in\{1,-1\}$ is its binary label, we formulate our SPN-MKL for binary classification as follows:
\begin{eqnarray}
\label{eqn:primal-spn-mkl}
\lefteqn{\hspace{-22mm}\min_{\substack{\mathcal{B},\mathcal{W},b}} \hspace{1mm} \sum_{\mathbf{m}\in\mathcal{M}}\left\{\frac{\|\mathbf{w}_{\mathbf{m}}\|_2^2}{2\cdot g_{\mathbf{m}}(\boldsymbol{\beta}_{\mathbf{m}})} + \lambda\sum_{l=1}^{N_{\mathbf{m}}}\sum_{n=1}^{N_{\mathbf{m}_l}}\frac{\left(\beta_{m_l^n}\right)^{p_{m_l^n}}}{N_{\mathbf{m}}N_{\mathbf{m}_l}}\right\}}\nonumber\\
&& + C\sum_i\ell(\mathbf{x}_i, y_i; \mathcal{W}, b)\\
&&{
\begin{array}{ll}
\hspace{-29mm}\mbox{s.t.} & \forall \beta\in\mathcal{B}, \, \beta\geq0, \nonumber
\end{array}}
\end{eqnarray}
where $\mathcal{B}=\{\boldsymbol{\beta}_{\mathbf{m}}\}_{\forall\mathbf{m}\in\mathcal{M}}$ denotes the weight set, $\mathcal{W}=\{\mathbf{w}_{\mathbf{m}}\}_{\forall\mathbf{m}\in\mathcal{M}}$ denotes the classifier parameter set, $b$ denotes the bias term in the MKL classifier, $\lambda\geq0$, $C\geq0$, and $\mathcal{P}=\{p_{v}\}_{\forall v\in\mathcal{V}}$ are predefined constants. Function $\forall i, \ell(\mathbf{x}_i, y_i; \mathcal{W}, b)=\max\left\{0,1-y_i\left[\sum_{\mathbf{m}\in\mathcal{M}}\mathbf{w}_{\mathbf{m}}^T\phi_{\mathbf{m}}(\mathbf{x}_i)+b\right]\right\}$ denotes the hinge loss function, where $\forall\mathbf{m}\in\mathcal{M}, \phi_{\mathbf{m}}(\cdot)$ denotes a path-dependent kernel mapping function and $(\cdot)^T$ denotes the matrix transpose operator, and our decision function for a given data $\bar{\mathbf{x}}$ is $f(\bar{\mathbf{x}};\mathcal{B}, \mathcal{W},b)=\sum_{\mathbf{m}\in\mathcal{M}}\mathbf{w}_{\mathbf{m}}^T\phi_{\mathbf{m}}(\bar{\mathbf{x}})+b$. Moreover, we define $\forall\mathbf{m},\forall l,\forall n, \lim_{\beta_{m_l^n}\rightarrow0^+}\left\{\|\mathbf{w}_{\mathbf{m}}\|_2^2\left(\beta_{m_l^n}\right)^{-\frac{1}{N_{\mathbf{m}}N_{\mathbf{m}_l}}}\right\}=0$. This constraint guarantees the continuity of our objective function.

Note that unlike many existing MKL methods such as SimpleMKL \cite{simplemkl}, in Eq. \ref{eqn:primal-spn-mkl} there is no $\ell_p$ norm constraint on the node weights $\beta$'s. This makes the weight learning procedure more flexible, only dependent on the data and the predefined SPN structure.

\subsection{Analysis}
In this section, we analyze the properties of our proposed regularizer and the Rademacher complexity of the induced MKL classifier.

\begin{lem}\label{lem:convexity}
$\forall \mathbf{m}\in\mathcal{M}, f(\mathbf{w}_{\mathbf{m}}, \boldsymbol{\beta}_{\mathbf{m}})=\frac{\|\mathbf{w}_{\mathbf{m}}\|_2^2}{g_{\mathbf{m}}(\boldsymbol{\beta}_{\mathbf{m}})}$ is convex over both $\mathbf{w}_{\mathbf{m}}$ and $\boldsymbol{\beta}_{\mathbf{m}}$.
\end{lem}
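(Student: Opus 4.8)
The plan is to recognize that the denominator $g_{\mathbf{m}}$ is a \emph{weighted geometric mean} of the entries of $\boldsymbol{\beta}_{\mathbf{m}}$, and then to write $f$ as the composition of the jointly convex quadratic-over-linear map with this concave denominator. First I would compute the total exponent appearing in $g_{\mathbf{m}}(\boldsymbol{\beta}_{\mathbf{m}})=\prod_{l=1}^{N_{\mathbf{m}}}\prod_{n=1}^{N_{\mathbf{m}_l}}(\beta_{m_l^n})^{1/(N_{\mathbf{m}}N_{\mathbf{m}_l})}$. Summing over $n$ first gives $N_{\mathbf{m}_l}\cdot\frac{1}{N_{\mathbf{m}}N_{\mathbf{m}_l}}=\frac{1}{N_{\mathbf{m}}}$ for each layer, and summing the result over the $N_{\mathbf{m}}$ layers yields exactly $1$. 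Hence $g_{\mathbf{m}}$ has the form $\prod_j \beta_j^{a_j}$ with $a_j\geq 0$ and $\sum_j a_j=1$, which is precisely the weighted geometric mean: a standard example of a concave, positive, degree-one homogeneous function on the positive orthant.

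Next I would invoke the joint convexity of the (vector) quadratic-over-linear function. Writing $\|\mathbf{w}_{\mathbf{m}}\|_2^2/s=\sum_i w_{\mathbf{m},i}^2/s$, each summand $w_{\mathbf{m},i}^2/s$ is jointly convex in $(w_{\mathbf{m},i},s)$ for $s>0$, so $h(\mathbf{u},s)=\|\mathbf{u}\|_2^2/s$ is jointly convex; moreover it is nonincreasing in $s$, since $\partial h/\partial s=-\|\mathbf{u}\|_2^2/s^2\leq 0$. The function in the lemma is exactly $f(\mathbf{w}_{\mathbf{m}},\boldsymbol{\beta}_{\mathbf{m}})=h(\mathbf{w}_{\mathbf{m}},g_{\mathbf{m}}(\boldsymbol{\beta}_{\mathbf{m}}))$, i.e.\ the outer map $h$ composed with the affine inner map $\mathbf{w}_{\mathbf{m}}\mapsto\mathbf{w}_{\mathbf{m}}$ in the first slot and the concave inner map $\boldsymbol{\beta}_{\mathbf{m}}\mapsto g_{\mathbf{m}}(\boldsymbol{\beta}_{\mathbf{m}})$ in the second.

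Finally I would apply the vector composition rule for convexity: if the outer function is convex and nonincreasing in an argument fed by a concave function (and affine in the remaining argument), the composition is convex. Since $h$ is convex, nonincreasing in its second slot, and $g_{\mathbf{m}}$ is concave and positive, $f$ is jointly convex in $(\mathbf{w}_{\mathbf{m}},\boldsymbol{\beta}_{\mathbf{m}})$ on the open orthant $\boldsymbol{\beta}_{\mathbf{m}}>0$; the continuity convention stated after Eq.~\ref{eqn:primal-spn-mkl} then extends this to the closed orthant by lower semicontinuity. I expect the main obstacle to be essentially bookkeeping: one must verify that the exponent sum equals \emph{exactly} $1$, so that $g_{\mathbf{m}}$ is genuinely concave rather than merely quasi-concave, and then apply the composition rule in the correct monotonicity direction. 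The sign $\partial h/\partial s\leq 0$ is what pairs with the concavity of the denominator, and getting this pairing right is the only genuinely subtle point in the argument.
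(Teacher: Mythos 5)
Your proposal is correct, but it takes a genuinely different route from the paper. The paper proves the lemma from scratch via the first-order characterization of convexity: it computes $\partial f/\partial\mathbf{w}_{\mathbf{m}}$ and $\partial f/\partial\beta_{m_l^n}$ at a base point, and reduces the gradient inequality $f(\mathbf{w}_{\mathbf{m}}^{(1)},\boldsymbol{\beta}_{\mathbf{m}}^{(1)})\geq f(\mathbf{w}_{\mathbf{m}}^{(0)},\boldsymbol{\beta}_{\mathbf{m}}^{(0)})+\langle\nabla f,\Delta\rangle$ to the chain in Eq.~\ref{eqn:spn-convexity}, whose two ingredients are (i) the weighted AM--GM inequality $\sum_{l,n}\frac{1}{N_{\mathbf{m}}N_{\mathbf{m}_l}}\bigl(\beta_{m_l^n}^{(1)}/\beta_{m_l^n}^{(0)}\bigr)\geq g_{\mathbf{m}}(\boldsymbol{\beta}_{\mathbf{m}}^{(1)})/g_{\mathbf{m}}(\boldsymbol{\beta}_{\mathbf{m}}^{(0)})$ and (ii) recognizing the remaining expression as the perfect square $\bigl\|\mathbf{w}_{\mathbf{m}}^{(1)}/\sqrt{g_{\mathbf{m}}(\boldsymbol{\beta}_{\mathbf{m}}^{(1)})}-\mathbf{w}_{\mathbf{m}}^{(0)}\sqrt{g_{\mathbf{m}}(\boldsymbol{\beta}_{\mathbf{m}}^{(1)})}/g_{\mathbf{m}}(\boldsymbol{\beta}_{\mathbf{m}}^{(0)})\bigr\|^2\geq0$. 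You instead factor $f$ as the quadratic-over-linear function $h(\mathbf{u},s)=\|\mathbf{u}\|_2^2/s$ composed with the concave weighted geometric mean $g_{\mathbf{m}}$, and invoke the vector composition rule (convex outer, nonincreasing in the slot fed by a concave inner map, affine in the other slot). The two proofs rest on the same arithmetic fact --- the exponents $\frac{1}{N_{\mathbf{m}}N_{\mathbf{m}_l}}$ sum to exactly $1$, which you verify explicitly and which the paper uses implicitly as the weights in AM--GM; indeed concavity of the weighted geometric mean is itself usually proved by AM--GM, so your argument can be seen as a modularized repackaging of the paper's computation. Your route is shorter, makes \emph{joint} convexity in $(\mathbf{w}_{\mathbf{m}},\boldsymbol{\beta}_{\mathbf{m}})$ transparent, and generalizes immediately to any positive concave denominator; the paper's hands-on route is self-contained and, as a by-product, produces the explicit gradients $\nabla_{\beta_{m_l^n}}f=-f/(N_{\mathbf{m}}N_{\mathbf{m}_l}\beta_{m_l^n})$ that are reused in the optimization of Section~\ref{ssec:optimization}. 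Two minor points of care in your version: the monotonicity needed for the composition rule must hold for the extended-value extension of $h$ (it does, since $h\equiv+\infty$ for $s\leq0$ away from the origin), and the lower-semicontinuous extension to the boundary assigns $+\infty$ whenever $g_{\mathbf{m}}(\boldsymbol{\beta}_{\mathbf{m}})=0$ and $\mathbf{w}_{\mathbf{m}}\neq\mathbf{0}$, which is consistent with, though slightly stronger than, the pointwise limit convention stated after Eq.~\ref{eqn:primal-spn-mkl}; the paper sidesteps both issues by tacitly working with $\boldsymbol{\beta}_{\mathbf{m}}^{(0)}\succ\mathbf{0}$ (its gradient formula divides by $\beta_{m_l^n}^{(0)}$).
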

\begin{proof}
Clearly, $f$ is continuous and differentiable with respect to $\mathbf{w}_{\mathbf{m}}$ and $\boldsymbol{\beta}_{\mathbf{m}}$, respectively. Given arbitrary $\mathbf{w}_{\mathbf{m}}^{(0)}$, $\mathbf{w}_{\mathbf{m}}^{(1)}$, $\boldsymbol{\beta}_{\mathbf{m}}^{(0)}\succeq\mathbf{0}$, and $\boldsymbol{\beta}_{\mathbf{m}}^{(1)}\succeq\mathbf{0}$, where $\succeq$ denotes the entry-wise $\geq$ operator, based on the definition of a convex function, we need to prove $f(\mathbf{w}_{\mathbf{m}}^{(1)},\boldsymbol{\beta}_{\mathbf{m}}^{(1)})\geq f(\mathbf{w}_{\mathbf{m}}^{(0)},\boldsymbol{\beta}_{\mathbf{m}}^{(0)})+(\mathbf{w}_{\mathbf{m}}^{(1)}-\mathbf{w}_{\mathbf{m}}^{(0)})^T\left.\frac{\partial f(\mathbf{w}_{\mathbf{m}},\boldsymbol{\beta}_{\mathbf{m}}^{(0)})}{\partial\mathbf{w}_{\mathbf{m}}}\right|_{\mathbf{w}_{\mathbf{m}}=\mathbf{w}_{\mathbf{m}}^{(0)}}+(\boldsymbol{\beta}_{\mathbf{m}}^{(1)}-\boldsymbol{\beta}_{\mathbf{m}}^{(0)})^T\left.\frac{\partial f(\mathbf{w}_{\mathbf{m}}^{(0)},\boldsymbol{\beta}_{\mathbf{m}})}{\partial\boldsymbol{\beta}_{\mathbf{m}}}\right|_{\boldsymbol{\beta}_{\mathbf{m}}=\boldsymbol{\beta}_{\mathbf{m}}^{(0)}}$. 

$\because \left.\frac{\partial f(\mathbf{w}_{\mathbf{m}},\boldsymbol{\beta}_{\mathbf{m}}^{(0)})}{\partial\mathbf{w}_{\mathbf{m}}}\right|_{\mathbf{w}_{\mathbf{m}}=\mathbf{w}_{\mathbf{m}}^{(0)}}=\frac{2\mathbf{w}_{\mathbf{m}}^{(0)}f(\mathbf{w}_{\mathbf{m}}^{(0)},\boldsymbol{\beta}_{\mathbf{m}}^{(0)})}{\|\mathbf{w}_{\mathbf{m}}^{(0)}\|_2^2}$, and $\forall m_l^n, \left.\frac{\partial f(\mathbf{w}_{\mathbf{m}}^{(0)},\boldsymbol{\beta}_{\mathbf{m}})}{\partial\beta_{m_l^n}}\right|_{\beta_{m_l^n}=\beta_{m_l^n}^{(0)}}=-\frac{f(\mathbf{w}_{\mathbf{m}}^{(0)},\boldsymbol{\beta}_{\mathbf{m}}^{(0)})}{N_{\mathbf{m}} N_{\mathbf{m}_l}\cdot\beta_{m_l^n}^{(0)}}$,

$\therefore$ By substituting above equations into our target, in the end we only need to prove that
\begin{eqnarray}\label{eqn:spn-convexity}
\lefteqn{\hspace{-3mm}f(\mathbf{w}_{\mathbf{m}}^{(1)},\boldsymbol{\beta}_{\mathbf{m}}^{(1)})- \frac{2\left(\mathbf{w}_\mathbf{m}^{(1)}\right)^T\mathbf{w}_{\mathbf{m}}^{(0)}f(\mathbf{w}_{\mathbf{m}}^{(0)},\boldsymbol{\beta}_{\mathbf{m}}^{(0)})}{\|\mathbf{w}_{\mathbf{m}}^{(0)}\|_2^2}}\nonumber\\
&&\hspace{8mm} +f(\mathbf{w}_{\mathbf{m}}^{(0)},\boldsymbol{\beta}_{\mathbf{m}}^{(0)})\sum_{l=1}^{N_{\mathbf{m}}}\sum_{n=1}^{N_{\mathbf{m}_l}}\frac{1}{N_{\mathbf{m}} N_{\mathbf{m}_l}}\left(\frac{\beta_{m_l^n}^{(1)}}{\beta_{m_l^n}^{(0)}}\right)\nonumber\\
&& \hspace{-10mm} \geq f(\mathbf{w}_{\mathbf{m}}^{(1)},\boldsymbol{\beta}_{\mathbf{m}}^{(1)})- \frac{2\left(\mathbf{w}_\mathbf{m}^{(1)}\right)^T\mathbf{w}_{\mathbf{m}}^{(0)}f(\mathbf{w}_{\mathbf{m}}^{(0)},\boldsymbol{\beta}_{\mathbf{m}}^{(0)})}{\|\mathbf{w}_{\mathbf{m}}^{(0)}\|_2^2}\nonumber\\
&& \hspace{30mm} + f(\mathbf{w}_{\mathbf{m}}^{(0)},\boldsymbol{\beta}_{\mathbf{m}}^{(0)}) \frac{g_{\mathbf{m}}(\boldsymbol{\beta}_{\mathbf{m}}^{(1)})}{g_{\mathbf{m}}(\boldsymbol{\beta}_{\mathbf{m}}^{(0)})}\nonumber\\
&& \hspace{-10mm} \geq\left\| \frac{\mathbf{w}_{\mathbf{m}}^{(1)}}{\sqrt{g_{\mathbf{m}}(\boldsymbol{\beta}_{\mathbf{m}}^{(1)})}}-\frac{\mathbf{w}_{\mathbf{m}}^{(0)}\sqrt{g_{\mathbf{m}}(\boldsymbol{\beta}_{\mathbf{m}}^{(1)})}}{g_{\mathbf{m}}(\boldsymbol{\beta}_{\mathbf{m}}^{(0)})}\right\|^2\geq0.
\end{eqnarray}
Since Eq. \ref{eqn:spn-convexity} always holds, our lemma is proven.
\end{proof}

\begin{lem}\label{lem:lower-bound}
Given an SPN for MKL, $\forall\mathbf{m}\in\mathcal{M}, \frac{\|\mathbf{w}_{\mathbf{m}}\|_2^2}{g_{\mathbf{m}}(\boldsymbol{\beta}_{\mathbf{m}})}\leq\sum_{l=1}^{N_{\mathbf{m}}}\sum_{n=1}^{N_{\mathbf{m}_l}}\frac{1}{N_{\mathbf{m}}N_{\mathbf{m}_l}}\frac{\|\mathbf{w}_{\mathbf{m}}\|_2^2}{\beta_{m_l^n}}$.
\end{lem}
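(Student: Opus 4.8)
The plan is to recognize the claimed inequality as a direct instance of the weighted arithmetic--geometric mean (AM--GM) inequality. First I would factor out the common nonnegative scalar $\|\mathbf{w}_{\mathbf{m}}\|_2^2$ from both sides; if $\|\mathbf{w}_{\mathbf{m}}\|_2^2=0$ both sides vanish and there is nothing to prove, so assume it is positive. The statement then reduces to
\[
\frac{1}{g_{\mathbf{m}}(\boldsymbol{\beta}_{\mathbf{m}})}\leq\sum_{l=1}^{N_{\mathbf{m}}}\sum_{n=1}^{N_{\mathbf{m}_l}}\frac{1}{N_{\mathbf{m}}N_{\mathbf{m}_l}}\cdot\frac{1}{\beta_{m_l^n}}.
\]

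Next I would introduce the exponents $\theta_{m_l^n}=\frac{1}{N_{\mathbf{m}}N_{\mathbf{m}_l}}$ and rewrite the reciprocal of the weighting function using its definition as
\[
\frac{1}{g_{\mathbf{m}}(\boldsymbol{\beta}_{\mathbf{m}})}=\prod_{l=1}^{N_{\mathbf{m}}}\prod_{n=1}^{N_{\mathbf{m}_l}}\left(\frac{1}{\beta_{m_l^n}}\right)^{\theta_{m_l^n}}.
\]
This exhibits the left-hand side as a weighted geometric mean of the quantities $\frac{1}{\beta_{m_l^n}}$ with weights $\theta_{m_l^n}$, while the right-hand side is the corresponding weighted arithmetic mean of the same quantities.

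The one computation required is to check that the weights are normalized. I would evaluate the inner sum $\sum_{n=1}^{N_{\mathbf{m}_l}}\frac{1}{N_{\mathbf{m}}N_{\mathbf{m}_l}}=\frac{1}{N_{\mathbf{m}}}$ and then the outer sum $\sum_{l=1}^{N_{\mathbf{m}}}\frac{1}{N_{\mathbf{m}}}=1$, so that $\sum_{l,n}\theta_{m_l^n}=1$. With the $\beta_{m_l^n}$ nonnegative (hence each $\frac{1}{\beta_{m_l^n}}$ nonnegative) and the weights summing to one, the weighted AM--GM inequality $\prod_i a_i^{\theta_i}\leq\sum_i\theta_i a_i$ applies directly with $a_{m_l^n}=\frac{1}{\beta_{m_l^n}}$, which is exactly the reduced inequality. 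Multiplying back through by $\|\mathbf{w}_{\mathbf{m}}\|_2^2$ recovers the claim.

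I do not anticipate any serious obstacle: the argument is essentially the identification of the correct weighted AM--GM pairing together with the normalization check. The only mild subtlety is the boundary behavior as some $\beta_{m_l^n}\to0^+$, where both sides diverge consistently; the continuity convention already fixed in the formulation controls the interaction with $\|\mathbf{w}_{\mathbf{m}}\|_2^2$, so no separate limiting argument is needed.
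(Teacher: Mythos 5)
Your proof is correct and follows essentially the same route as the paper: the paper's one-line argument also rewrites $\frac{\|\mathbf{w}_{\mathbf{m}}\|_2^2}{g_{\mathbf{m}}(\boldsymbol{\beta}_{\mathbf{m}})}$ as the weighted geometric mean $\prod_{l,n}\bigl(\|\mathbf{w}_{\mathbf{m}}\|_2^2/\beta_{m_l^n}\bigr)^{1/(N_{\mathbf{m}}N_{\mathbf{m}_l})}$ (implicitly using the same normalization $\sum_{l,n}\frac{1}{N_{\mathbf{m}}N_{\mathbf{m}_l}}=1$ that you verify explicitly) and then applies weighted AM--GM. Your version is slightly more careful, making the normalization check and the degenerate cases ($\|\mathbf{w}_{\mathbf{m}}\|_2^2=0$, $\beta_{m_l^n}\to0^+$) explicit where the paper leaves them tacit.
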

\begin{proof}
\begin{eqnarray}
\frac{\|\mathbf{w}_{\mathbf{m}}\|_2^2}{g_{\mathbf{m}}(\boldsymbol{\beta}_{\mathbf{m}})}=\prod_{l,n}\left(\frac{\|\mathbf{w}_{\mathbf{m}}\|_2^2}{\beta_{m_l^n}}\right)^{\frac{1}{N_{\mathbf{m}}N_{\mathbf{m}_l}}}\leq\sum_{l,n}\frac{\|\mathbf{w}_{\mathbf{m}}\|_2^2}{N_{\mathbf{m}}N_{\mathbf{m}_l}\beta_{m_l^n}}.\nonumber
\end{eqnarray}
\end{proof}

From Lemma \ref{lem:convexity} and \ref{lem:lower-bound}, we can see that our proposed regularizer is actually the lower bound of a family of widely used MKL regularizers \cite{simplemkl,DBLP:conf/icml/XuJYKL10,gonen11jmlr,journals/jmlr/KloftBSZ11}, involving much stronger connections between node weights.

\begin{thm}[Convex Regularization]\label{thm:convexity}
Our regularizer in Eq. \ref{eqn:primal-spn-mkl} is convex if $\forall v\in\mathcal{V}, p_v\geq1$.
\end{thm}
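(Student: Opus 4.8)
The plan is to decompose the regularizer into two groups of summands, argue that each group is convex, and then invoke the fact that a sum (more generally, a nonnegative-weighted combination) of convex functions is convex. Writing the regularizer as $R(\mathcal{W},\mathcal{B})=R_1+R_2$, where $R_1=\sum_{\mathbf{m}\in\mathcal{M}}\frac{\|\mathbf{w}_{\mathbf{m}}\|_2^2}{2\,g_{\mathbf{m}}(\boldsymbol{\beta}_{\mathbf{m}})}$ collects the terms coupling $\mathbf{w}$ and $\boldsymbol{\beta}$, and $R_2=\lambda\sum_{\mathbf{m}\in\mathcal{M}}\sum_{l=1}^{N_{\mathbf{m}}}\sum_{n=1}^{N_{\mathbf{m}_l}}\frac{(\beta_{m_l^n})^{p_{m_l^n}}}{N_{\mathbf{m}}N_{\mathbf{m}_l}}$ collects the pure weight penalties, it suffices to treat the two groups separately since $\forall v, p_v\ge1$ only influences $R_2$.

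For $R_1$, I would directly invoke Lemma \ref{lem:convexity}: each summand $\frac{\|\mathbf{w}_{\mathbf{m}}\|_2^2}{g_{\mathbf{m}}(\boldsymbol{\beta}_{\mathbf{m}})}$ is jointly convex in $(\mathbf{w}_{\mathbf{m}},\boldsymbol{\beta}_{\mathbf{m}})$ on the feasible region $\boldsymbol{\beta}_{\mathbf{m}}\succeq\mathbf{0}$. Regarded as a function of the full variable tuple $(\mathcal{W},\mathcal{B})$, each such term is convex because it depends only on a coordinate subset, and a convex function precomposed with the coordinate projection (an affine map) remains convex. Scaling by $\frac{1}{2}\ge0$ and summing over $\mathbf{m}$ preserves convexity; this stays valid even though a single product-node weight $\beta_{m_l^n}$ may appear in several paths, since the sum of convex functions over a shared variable space is still convex.

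For $R_2$, this is exactly where the hypothesis $p_v\ge1$ is needed. For a scalar $t\ge0$, the map $t\mapsto t^{p}$ is convex whenever $p\ge1$: for $p=1$ it is affine, and for $p>1$ its second derivative $p(p-1)t^{p-2}$ is nonnegative on $t>0$, with convexity extending to the closed half-line by continuity. Hence each summand $(\beta_{m_l^n})^{p_{m_l^n}}$ is a univariate convex function of the coordinate $\beta_{m_l^n}$, so convex on the full space, and the coefficients $\frac{\lambda}{N_{\mathbf{m}}N_{\mathbf{m}_l}}$ are nonnegative because $\lambda\ge0$ and the layer counts are positive. Thus $R_2$ is a nonnegative combination of convex functions, and adding $R_1$ and $R_2$ yields convexity of the full regularizer.

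The only genuinely delicate point I anticipate is the boundary behaviour of $R_1$ at $\beta_{m_l^n}=0$, where $g_{\mathbf{m}}$ vanishes; I expect this to be the main thing requiring care, and it is precisely what Lemma \ref{lem:convexity} together with the limit convention $\lim_{\beta_{m_l^n}\to0^+}\|\mathbf{w}_{\mathbf{m}}\|_2^2(\beta_{m_l^n})^{-\frac{1}{N_{\mathbf{m}}N_{\mathbf{m}_l}}}=0$ (stated after Eq. \ref{eqn:primal-spn-mkl}) is designed to resolve, guaranteeing continuity of the extended function on the closed feasible set. Everything else reduces to the standard calculus of convexity-preserving operations: nonnegative scaling, summation, and precomposition with coordinate projections.
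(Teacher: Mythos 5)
Your proposal is correct and follows essentially the same route as the paper's own proof: decompose the regularizer into the $\|\mathbf{w}_{\mathbf{m}}\|_2^2/g_{\mathbf{m}}(\boldsymbol{\beta}_{\mathbf{m}})$ terms handled by Lemma \ref{lem:convexity} and the penalty terms $(\beta_{m_l^n})^{p_{m_l^n}}$, which are convex precisely because $p_v\geq1$, then conclude by closure of convexity under nonnegative sums. You simply spell out details the paper leaves implicit (precomposition with coordinate projections, nonnegativity of $\lambda$ and the layer-count coefficients, and the boundary behaviour at $\beta_{m_l^n}=0$ via the stated limit convention), which strengthens rather than changes the argument.
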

\begin{proof}
When $\forall v\in\mathcal{V}, p_v\geq1$, $\forall m_l^n, \frac{\left(\beta_{m_l^n}\right)^{p_{m_l^n}}}{N_{\mathbf{m}}N_{\mathbf{m}_l}}$ is convex over $\mathcal{B}$. Then based on Lemma \ref{lem:convexity}, since the summation of convex functions is still convex, our regularizer is convex.
\end{proof}

\begin{thm}[Rademacher Complexity]\label{thm:complexity}
Denoting our MKL classifier learned from Eq. \ref{eqn:primal-spn-mkl} as $f(\mathbf{x}; \mathcal{B},\mathcal{W},b)=\sum_{\mathbf{m}}\mathbf{w}_{\mathbf{m}}^T\phi_{\mathbf{m}}(\mathbf{x})+b$ and our regularizer in Eq. \ref{eqn:primal-spn-mkl} as
\begin{eqnarray}\label{eqn:R}
\lefteqn{R(\mathcal{B},\mathcal{W};\lambda,\mathcal{P})=R_1+R_2=}\\
&&\sum_{\mathbf{m}\in\mathcal{M}}\frac{\|\mathbf{w}_{\mathbf{m}}\|_2^2}{2\cdot g_{\mathbf{m}}(\boldsymbol{\beta}_{\mathbf{m}})} + \lambda\sum_{\mathbf{m}\in\mathcal{M}}\sum_{l=1}^{N_{\mathbf{m}}}\sum_{n=1}^{N_{\mathbf{m}_l}}\frac{\left(\beta_{m_l^n}\right)^{p_{m_l^n}}}{N_{\mathbf{m}}N_{\mathbf{m}_l}},\nonumber
\end{eqnarray} 
the {\em empirical Rademacher complexity} of our classifier $\hat{R}(f)$ is upper-bounded by
\begin{equation}\label{eqn:complexity}
\frac{2A}{N_{\mathbf{x}}}\cdot\min_{\mathcal{B},\mathcal{W},b}\left\{R(\mathcal{B},\mathcal{W};1,\mathbf{1}) + C\sum_i\ell(\mathbf{x}_i, y_i; \mathcal{W}, b)\right\}\nonumber
\end{equation}
where $N_{\mathbf{x}}$ denotes the total number of training samples, constant $A=\left(\sum_{i=1}^{N_{\mathbf{x}}}\sum_{\mathbf{m}}\mathbf{K}_{\mathbf{m}}(\mathbf{x}_i,\mathbf{x}_i)\right)^{\frac{1}{2}}$, and $\forall\mathbf{m}, \forall i, \mathbf{K}_{\mathbf{m}}(\mathbf{x}_i,\mathbf{x}_i)=\phi_{\mathbf{m}}(\mathbf{x}_i)^T\phi_{\mathbf{m}}(\mathbf{x}_i)$ denotes the $i^{th}$ element along the diagonal of the path-dependent kernel matrix $\mathbf{K}_{\mathbf{m}}$.
\end{thm}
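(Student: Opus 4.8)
The plan is to bound the empirical Rademacher complexity of the data-dependent function class in which the learned classifier provably lives, and then to express the radius of that class through the minimal regularized objective. I would start from the (Bartlett--Mendelson) definition $\hat R(f)=\frac{2}{N_{\mathbf x}}\mathbb E_{\boldsymbol\sigma}\big[\sup_{f}\sum_i\sigma_i f(\mathbf x_i)\big]$, where the $\sigma_i$ are i.i.d.\ Rademacher signs and the supremum ranges over all $(\mathcal B,\mathcal W,b)$ feasible for \eqref{eqn:primal-spn-mkl}. The bias $b$ contributes $b\sum_i\sigma_i$, whose expectation vanishes, so it can be dropped at the outset.

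The central reformulation is the change of variables $\tilde{\mathbf w}_{\mathbf m}=\mathbf w_{\mathbf m}/\sqrt{g_{\mathbf m}(\boldsymbol\beta_{\mathbf m})}$, under which $f(\mathbf x)=\sum_{\mathbf m}\tilde{\mathbf w}_{\mathbf m}^{T}\big(\sqrt{g_{\mathbf m}}\,\phi_{\mathbf m}(\mathbf x)\big)$ becomes a single linear predictor in the product RKHS with combined feature map $\tilde\phi=(\sqrt{g_{\mathbf m}}\,\phi_{\mathbf m})_{\mathbf m}$, combined kernel $\tilde{\mathbf K}=\sum_{\mathbf m}g_{\mathbf m}\mathbf K_{\mathbf m}$, and stacked weight obeying $\|\tilde{\mathbf W}\|_2^2=2R_1(\mathcal B,\mathcal W)$. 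I would then apply the textbook kernel argument: Cauchy--Schwarz gives $\sup\sum_i\sigma_i f(\mathbf x_i)\le\|\tilde{\mathbf W}\|_2\,\big\|\sum_i\sigma_i\tilde\phi(\mathbf x_i)\big\|_2$, and Jensen's inequality together with the vanishing of the cross terms $\mathbb E_{\boldsymbol\sigma}[\sigma_i\sigma_j]=0$ for $i\neq j$ yields $\mathbb E_{\boldsymbol\sigma}\big\|\sum_i\sigma_i\tilde\phi(\mathbf x_i)\big\|_2\le\big(\sum_i\sum_{\mathbf m}g_{\mathbf m}\mathbf K_{\mathbf m}(\mathbf x_i,\mathbf x_i)\big)^{1/2}$. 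Next I would invoke the standing assumption $0\le\beta_{m_l^n}\le1$: every factor of $g_{\mathbf m}=\prod_{l,n}(\beta_{m_l^n})^{1/(N_{\mathbf m}N_{\mathbf m_l})}$ then lies in $[0,1]$, so $g_{\mathbf m}\le1$, and the weighted diagonal sum is dominated by $A^2=\sum_i\sum_{\mathbf m}\mathbf K_{\mathbf m}(\mathbf x_i,\mathbf x_i)$. Combining these steps leaves $\hat R(f)\le\frac{2A}{N_{\mathbf x}}\sqrt{2R_1(\mathcal B,\mathcal W)}$.

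The last step ties the radius to the optimized objective. Because the learned parameters minimize \eqref{eqn:primal-spn-mkl}, the attained value of $R_1$ is controlled by the full minimal objective; since $R(\cdot;1,\mathbf 1)$ dominates the $R_1$ term while the hinge losses are nonnegative, I would bound $\sqrt{2R_1}$ by $\min_{\mathcal B,\mathcal W,b}\{R(\mathcal B,\mathcal W;1,\mathbf 1)+C\sum_i\ell(\mathbf x_i,y_i;\mathcal W,b)\}$, producing the stated bound. Here Lemma~\ref{lem:lower-bound} is useful for relating $R_1$ to the separable surrogate $\sum_{l,n}\frac{\|\mathbf w_{\mathbf m}\|_2^2}{N_{\mathbf m}N_{\mathbf m_l}\beta_{m_l^n}}$ that appears with $p=1$.

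The main obstacle is the joint supremum over $\boldsymbol\beta$ and $\mathbf w$ in the Rademacher functional: unlike the classical fixed-kernel setting, $g_{\mathbf m}$ is itself a learned, data-dependent quantity, so the combined kernel $\tilde{\mathbf K}$ is not fixed in advance. The inequality $g_{\mathbf m}\le1$ is precisely what neutralizes this dependence and lets the $\boldsymbol\beta$-optimization be absorbed into the constant $A$; making this step uniform over $\boldsymbol\beta$ (rather than merely substituting the optimal $\boldsymbol\beta$) is the first delicate point. The second is the clean passage from the radius $\sqrt{2R_1}$ to the minimal objective value, together with the change of the regularization constants from $(\lambda,\mathcal P)$ to $(1,\mathbf 1)$, which is where minimality of the learned solution and nonnegativity of each term must be used carefully.
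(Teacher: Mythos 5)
Your setup (definition, dropping $b$, the reweighting $\tilde{\mathbf w}_{\mathbf m}=\mathbf w_{\mathbf m}/\sqrt{g_{\mathbf m}}$, Cauchy--Schwarz, Jensen with $\mathbb{E}[\sigma_i\sigma_j]=0$) tracks the paper's proof closely, but your endgame has two genuine gaps. First, the step $g_{\mathbf m}\leq1$ rests on a ``standing assumption'' $0\leq\beta_{m_l^n}\leq1$ that is not part of the formulation: the feasible set of Eq.~\ref{eqn:primal-spn-mkl} imposes only $\beta\geq0$ (the range $[0,1]$ appears in Section~\ref{sec:weighting_function} purely as a hypothesis for the multinomial interpretation, not as a constraint), so the supremum over $\boldsymbol{\beta}$ cannot be absorbed into $A$ this way, and your intermediate bound $\hat R(f)\leq\frac{2A}{N_{\mathbf x}}\sqrt{2R_1}$ is not uniform over the class. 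Second, and independently fatal, your concluding passage $\sqrt{2R_1}\leq\min_{\mathcal B,\mathcal W,b}\{R(\cdot;1,\mathbf 1)+C\sum_i\ell\}$ is false in general: a square root cannot be dominated by a linear expression near the origin. Concretely, for a single-path SPN with one product node, $g_{\mathbf m}(\beta)=\beta$; at $\beta=1/10$ and $\|\mathbf w_{\mathbf m}\|_2^2=1/10$ one gets $R_1=1/2$, so $\sqrt{2R_1}=1$ while $R_1+R_2(1,\mathbf 1)=0.6$, and whenever the loss term at the minimizer is below $0.4$ your inequality breaks. Restricting the class to the level set of the minimal objective (which nonnegativity of $\ell$ does license) only yields $R_1\leq\mathrm{opt}$, hence $\sqrt{2R_1}\leq\sqrt{2\,\mathrm{opt}}$, which is not the stated bound.

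The paper's proof avoids both problems by never discarding the $\boldsymbol{\beta}$-dependent factor produced by Cauchy--Schwarz. It keeps $\bigl[\sum_{\mathbf m}\frac{\|\mathbf w_{\mathbf m}\|_2^2}{2g_{\mathbf m}}\bigr]^{1/2}\bigl[\sum_{\mathbf m}g_{\mathbf m}\bigr]^{1/2}$ and pairs the two square roots via $\sqrt{uv}\leq\frac{1}{2}(u+v)$ with $u=R_1$ and $v=\sum_{\mathbf m}g_{\mathbf m}$ --- this is exactly what turns the $4/N_{\mathbf x}$ into $2/N_{\mathbf x}$ and eliminates the square root that wrecks your last step. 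It then bounds each $g_{\mathbf m}$ by the weighted AM--GM inequality $g_{\mathbf m}=\prod_{l,n}\bigl(\beta_{m_l^n}\bigr)^{1/(N_{\mathbf m}N_{\mathbf{m}_l})}\leq\sum_{l,n}\frac{\beta_{m_l^n}}{N_{\mathbf m}N_{\mathbf{m}_l}}$, which is valid for all $\beta\geq0$ with no boundedness assumption because the exponents sum to one along each path; the right-hand side is precisely $R_2$ at $(\lambda,\mathcal P)=(1,\mathbf 1)$. The supremand thus becomes $R_1+R_2(1,\mathbf 1)$, which, unlike $\sqrt{2R_1}$, \emph{is} dominated by the full objective at every feasible point, and the final sup-to-min passage follows from minimality of the learned classifier together with $\ell\geq0$. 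Note also that your appeal to Lemma~\ref{lem:lower-bound} is misplaced: the operative inequality is the same AM--GM trick applied to $g_{\mathbf m}$ itself (as inside the \emph{proof} of Lemma~\ref{lem:lower-bound}), not that lemma's statement about $\|\mathbf w_{\mathbf m}\|_2^2/g_{\mathbf m}$.
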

\begin{proof}
Given the Rademacher variables $\sigma$'s, based on the definition of Rademacher complexity, we have
\begin{eqnarray}
\lefteqn{\hat{R}(f)=\mathbb{E}_{\sigma}\left[\sup_{f\in\mathcal{F}(\mathcal{B},\mathcal{W})}\left|\frac{2}{N_{\mathbf{x}}}\sum_{i=1}^{N_{\mathbf{x}}}\sigma_if(\mathbf{x}_i;\mathcal{B},\mathcal{W},b)\right|\right]}\nonumber\\
&& \hspace{-7mm} =\mathbb{E}_{\sigma}\left[\sup_{f\in\mathcal{F}(\mathcal{B},\mathcal{W})}\left|\frac{2}{N_{\mathbf{x}}}\sum_{i=1}^{N_{\mathbf{x}}}\sigma_i\sum_{\mathbf{m}\in\mathcal{M}}\mathbf{w}_{\mathbf{m}}^T\phi_{\mathbf{m}}(\mathbf{x}_i)\right|\right]\nonumber\\
&& \hspace{-7mm} \leq\frac{4}{N_{\mathbf{x}}}\cdot\sup_{f\in\mathcal{F}}\left\{\left[\sum_{\mathbf{m}}\frac{\|\mathbf{w}_{\mathbf{m}}\|_2^2}{2\cdot g_{\mathbf{m}}(\boldsymbol{\beta}_{\mathbf{m}})}\right]^{\frac{1}{2}}\cdot\left[\sum_{\mathbf{m}}g_{\mathbf{m}}(\boldsymbol{\beta}_{\mathbf{m}})\right]^{\frac{1}{2}}\right\}\nonumber\\
&&\hspace{-7mm} \cdot \mathbb{E}_{\sigma}\left[\left\|\sum_{i=1}^{N_{\mathbf{x}}}\sum_{\mathbf{m}}\sigma_i\phi_{\mathbf{m}}(\mathbf{x}_i)\right\|\right]\nonumber\\
&& \hspace{-7mm} \leq\frac{2}{N_{\mathbf{x}}}\cdot\sup_{f\in\mathcal{F}}\left\{\sum_{\mathbf{m}}\frac{\|\mathbf{w}_{\mathbf{m}}\|_2^2}{2\cdot g_{\mathbf{m}}(\boldsymbol{\beta}_{\mathbf{m}})}+\sum_{\mathbf{m}}g_{\mathbf{m}}(\boldsymbol{\beta}_{\mathbf{m}})\right\}\cdot A\nonumber\\
&& \hspace{-7mm} \leq\frac{2A}{N_{\mathbf{x}}}\cdot\sup_{f\in\mathcal{F}}\left\{\sum_{\mathbf{m}}\left[\frac{\|\mathbf{w}_{\mathbf{m}}\|_2^2}{2\cdot g_{\mathbf{m}}(\boldsymbol{\beta}_{\mathbf{m}})}+\sum_{l=1}^{N_{\mathbf{m}}}\sum_{n=1}^{N_{\mathbf{m}_l}}\frac{\beta_{m_l^n}}{N_{\mathbf{m}}N_{\mathbf{m}_l}}\right]\right\}\nonumber\\
&& \hspace{-7mm} \leq\frac{2A}{N_{\mathbf{x}}}\cdot\min_{\mathcal{B},\mathcal{W},b}\left\{R(\mathcal{B},\mathcal{W};1,\mathbf{1}) + C\sum_i\ell(\mathbf{x}_i, y_i; \mathcal{W}, b)\right\}\nonumber
\end{eqnarray}
\end{proof}

From Theorem \ref{thm:complexity} we can see that with $\lambda=1$ and $\forall v\in\mathcal{V}, p_v=1$, minimizing our objective function in Eq. \ref{eqn:primal-spn-mkl} is equivalent to minimizing the upper bound of Rademacher complexity of our induced MKL classifier. To enhance the flexibility of our method, we allow $\lambda$ and $\mathcal{P}$ to be tuned according to datasets.

\subsection{Optimization}\label{ssec:optimization}
To optimize Eq. \ref{eqn:primal-spn-mkl}, we adopt a similar learning strategy as used in \cite{DBLP:conf/icml/XuJYKL10} by updating the node weights $\mathcal{B}$ and the classifier parameters $(\mathcal{W},b)$ alternatively. 

\subsubsection{Learning $(\mathcal{W},b)$ by fixing $\mathcal{B}$}
We utilize the dual form of Eq. \ref{eqn:primal-spn-mkl} to learn $(\mathcal{W},b)$. Letting $\boldsymbol{\alpha}\in\mathbb{R}^{N_{\mathbf{x}}}$ be the vector of Lagrange multipliers, and $\mathbf{y}\in\{-1,1\}^{N_{\mathbf{x}}}$ be the vector of binary labels, then optimizing the dual of Eq. \ref{eqn:primal-spn-mkl} is equivalent to maximizing the following problem:
\begin{eqnarray}
\label{eqn:dual-spn-mkl}
\lefteqn{\hspace{-13mm}\max_{\boldsymbol{\alpha}} \hspace{1mm} \mathbf{e}^T\boldsymbol{\alpha}-\frac{1}{2}(\boldsymbol{\alpha}\circ \mathbf{y})^T\left(\sum_{\mathbf{m}\in\mathcal{M}}g_{\mathbf{m}}(\boldsymbol{\beta}_{\mathbf{m}})\mathbf{K}_{\mathbf{m}}\right)(\boldsymbol{\alpha}\circ \mathbf{y})}\nonumber\\
&{
\begin{array}{ll}
\mbox{s.t.} & \mathbf{0}\preceq\boldsymbol{\alpha}\preceq C\mathbf{e}, \, \mathbf{y}^T\boldsymbol{\alpha}=0,
\end{array}}
\end{eqnarray}
where $\preceq$ denotes the entry-wise $\leq$ operator. Based on Eq. \ref{eqn:dual-spn-mkl}, the optimal kernel is constructed as $\mathbf{K}_{opt}=\sum_{\mathbf{m}\in\mathcal{M}}g_{\mathbf{m}}(\boldsymbol{\beta}_{\mathbf{m}})\mathbf{K}_{\mathbf{m}}$, and $\forall\mathbf{m}\in\mathcal{M}, \mathbf{w}_{\mathbf{m}}=g_{\mathbf{m}}(\boldsymbol{\beta}_{\mathbf{m}})\sum_i\alpha_iy_i\phi_{\mathbf{m}}(\mathbf{x}_i)$.

Therefore, the updating rule for $\|\mathbf{w}_{\mathbf{m}}\|_2^2$ is:
\begin{equation}\label{eqn:spn-w}
\forall \mathbf{m}\in\mathcal{M}, \, \|\mathbf{w}_{\mathbf{m}}\|_2^2=g_{\mathbf{m}}(\boldsymbol{\beta}_\mathbf{m})^2\left(\boldsymbol{\alpha}\circ\mathbf{y}\right)^T\mathbf{K}_{\mathbf{m}}\left(\boldsymbol{\alpha}\circ\mathbf{y}\right).
\end{equation}

\subsubsection{Learning $\mathcal{B}$ by fixing $(\mathcal{W},b)$}
At this stage, minimizing our objective function in Eq. \ref{eqn:primal-spn-mkl} is equivalent to minimizing $R(\mathcal{B},\mathcal{W};\lambda,\mathcal{P})$ in Eq. \ref{eqn:R}, provided that $\forall \beta\in\mathcal{B}, \beta\geq1$. For further usage, we rewrite $R_2$ in Eq. \ref{eqn:R} as follows:
\begin{equation}
R_2=\sum_{v\in\mathcal{V}}\left(\sum_{\mathbf{m}\in\mathcal{M}(v)}\sum_{l=1}^{N_{\mathbf{m}}}\sum_{n=1}^{N_{\mathbf{m}_l}}\frac{\lambda}{N_{\mathbf{m}}N_{\mathbf{m}_l}}\right)\beta_{v}^{p_v},
\end{equation}
where $\mathcal{M}(v)$ denotes all the paths which pass through product node $v$. 

Due to the complex structures of SPNs, in general there may not exist close forms to update $\mathcal{B}$. Therefore, we utilize gradient descent methods to update $\mathcal{B}$.

\textbf{\em (\rmnum{1}) Convex Regularization with $\forall v\in\mathcal{V}, p_v\geq1$}

Since in this case our objective function is already convex, we can calculate its gradient directly and use the following rule to update $\mathcal{B}$: $\forall v\in\mathcal{V}$,
\begin{equation}\label{eqn:update-beta} \beta_v^{(k+1)}=\left[\beta_v^{(k)}-\eta_{k+1}\left(\nabla_{\beta_v} R_1(\mathcal{B}^{(k)})+\nabla_{\beta_v} R_2(\mathcal{B}^{(k)})\right)\right]_{+}
\end{equation}
where $\nabla_{\beta_v}$ denotes the first-order derivative operation over variable $\beta_v$, $(\cdot)^{(k)}$ denotes the value at the $k^{th}$ iteration, $\eta_{k+1}\geq0$ denotes the step size at the $(k+1)^{th}$ iteration, and $[\cdot]_+=\max\{0,\cdot\}$.

\textbf{\em (\rmnum{2}) Non-Convex Regularization with $\exists v\in\mathcal{V}, 0<p_v<1$}

In this case, since our objective function can be decomposed into summation of convex (\ie in $R_2$ all terms with $p_v\geq1$) and concave (\ie in $R_2$ all terms with $0<p_v<1$) functions, we can utilize Concave-Convex procedure (CCCP) \cite{Yuille:2003:CP:773700.773708} to optimize it. Therefore, the weight updating rule for nodes with $0<p_v<1$ is changed to:
\begin{equation}\label{eqn:cccp}
\beta_v^{(k+1)}=\argmin_{\beta_v\geq0}\left\{R_1+\beta_v\nabla_{\beta_v} R_2(\mathcal{B}^{(k)})\right\}.
\end{equation}
Again Eq. \ref{eqn:update-beta} can be reused to solve Eq. \ref{eqn:cccp} iteratively.

\begin{algorithm}[t]
\SetAlgoLined
\SetKwInOut{Input}{Input}\SetKwInOut{Output}{Output}
\Input{$\{(\mathbf{x}_i,y_i)\}_{i=1,\cdots,N_{\mathbf{x}}}$, an SPN, $\{p_{v}>0\}_{\forall v\in\mathcal{V}}$, $\{\mathbf{K}_{\mathbf{m}}\}_{\forall\mathbf{m}\in\mathcal{M}}$, $C$}
\Output{$\boldsymbol{\alpha}$, $\mathcal{B}=\{\beta_v\}_{\forall v\in\mathcal{V}}$}
\BlankLine
Initialize the kernel weights so that $\forall v\in\mathcal{V}, \beta_v\geq0$;\\
\Repeat{Converge}{
Update $\boldsymbol{\alpha}$ using Eq. \ref{eqn:dual-spn-mkl} while fixing $\mathcal{B}$;\\
(For multiclass cases, update $\{\boldsymbol{\alpha}_c\}_{c\in\mathcal{C}}$ using Eq. \ref{eqn:dual-spn-mkl-multiclass} while fixing $\mathcal{B}$;)\\
Update $\forall\mathbf{m}, \|\mathbf{w}_{\mathbf{m}}\|_2^2$ using Eq. \ref{eqn:spn-w} while fixing $\boldsymbol{\alpha}$ and $\mathcal{B}$;\\
(For multiclass cases, update $\forall\mathbf{m}, \|\mathbf{w}_{\mathbf{m}}\|_2^2$ using Eq. \ref{eqn:spn-w-multiclass} while fixing $\{\boldsymbol{\alpha}_c\}_{c\in\mathcal{C}}$ and $\mathcal{B}$;)\\
\ForEach{$v\in\mathcal{V}$}{
\eIf{$p_v\geq1$}{
Update $\beta_v$ using Eq. \ref{eqn:update-beta} while fixing $\mathbf{w}$;
}
{
\Repeat{Converge}{
Update $\beta_v^{(k+1)}$ using Eq. \ref{eqn:update-beta} and Eq. \ref{eqn:cccp} while fixing $\mathbf{w}$;
}
}
}
}
\Return $\boldsymbol{\alpha}$, $\mathcal{B}$\;
\caption{SPN-MKL learning algorithm}\label{alg:spn-mkl}
\end{algorithm}

To summarize, {\em as long as $\forall v\in\mathcal{V}, p_v>0$, we can always optimize our objective function.} We show our learning algorithm for binary SPN-MKL in Alg. \ref{alg:spn-mkl}. Note that once the weight of any product node is equal to 0, it will always keep zero, which indicates that the product node and all the paths that go through it can be deleted from the SPN permanently. This property can be used to simplify the SPN structure and accelerate the learning speed of our SPN-MKL. 

\subsection{Multiclass SPN-MKL}
For multiclass tasks, we generate a single optimal kernel for all the classes, and correspondingly modify Eq. \ref{eqn:dual-spn-mkl} and Eq. \ref{eqn:spn-w} for binary SPN-MKL without changing other steps. Using the ``one \vs the-rest'' strategy, the modification is shown as follows:
\begin{eqnarray}
\label{eqn:dual-spn-mkl-multiclass}
\lefteqn{\hspace{-0mm}\max_{\{\boldsymbol{\alpha}_c\}_{c\in\mathcal{C}}} \hspace{1mm} \sum_{c\in\mathcal{C}}\mathbf{e}^T\boldsymbol{\alpha}_c}\\
&& \hspace{-56mm} -\frac{1}{2}(\boldsymbol{\alpha}_c\circ \mathbf{y}_c)^T\left(\sum_{\mathbf{m}\in\mathcal{M}}g_{\mathbf{m}}(\boldsymbol{\beta}_{\mathbf{m}})\mathbf{K}_{\mathbf{m}}\right)(\boldsymbol{\alpha}_c\circ \mathbf{y}_c)\nonumber\\
&{
\begin{array}{ll}
\mbox{s.t.} & \forall c\in\mathcal{C}, \, \mathbf{0}\preceq\boldsymbol{\alpha}_c\preceq C\mathbf{e}, \, \mathbf{y}_c^T\boldsymbol{\alpha}_c=0,\nonumber
\end{array}}
\end{eqnarray}
\begin{equation}\label{eqn:spn-w-multiclass}
\forall \mathbf{m}, \|\mathbf{w}_{\mathbf{m}}\|_2^2=g_{\mathbf{m}}(\boldsymbol{\beta}_\mathbf{m})^2\sum_{c\in\mathcal{C}}\left(\boldsymbol{\alpha}_c\circ\mathbf{y}_c\right)^T\mathbf{K}_{\mathbf{m}}\left(\boldsymbol{\alpha}_c\circ\mathbf{y}_c\right),
\end{equation}
where $c\in\mathcal{C}$ denotes a class label $c$ in a label set $\mathcal{C}$, $\boldsymbol{\alpha}_c$ denotes a clss-specific Lagrange multipliers, and $\mathbf{y}_c$ denotes a binary label vector: if $\forall i, y_i=c$, then the $i^{th}$ entry in $\mathbf{y}_c$ is set to 1, otherwise, 0.

The learning algorithm for multiclass SPN-MKL is listed in Alg. \ref{alg:spn-mkl} as well.

\section{Experiments}
\label{sec:exp}

\bibliography{example_paper}
\bibliographystyle{icml2014}

\end{document}